\algrenewcommand\algorithmicrequire{\textbf{Input:}}
\algrenewcommand\algorithmicensure{\textbf{Output:}}
\algnewcommand\algorithmicinput{\textbf{Input:}}
\algnewcommand\INPUT{\item[\algorithmicinput]}
	\tikzstyle{block} = [rectangle, rounded corners, minimum width=3cm, minimum height=1cm,text centered, draw=black, fill=red!30]
	\tikzstyle{new} = [rectangle, rounded corners, minimum width=1cm, minimum
	\tikzstyle{arrow} = [thick,->,>=stealth]
\DeclareFontFamily{OT1}{pzc}{}
\DeclareFontShape{OT1}{pzc}{m}{it}{<-> s * [1.200] pzcmi7t}{}
\DeclareMathAlphabet{\mathpzc}{OT1}{pzc}{m}{it}
  \DeclareMathAlphabet\PazoBB{U}{fplmbb}{m}{n}%
\newtheorem{theorem}{Theorem}
\algrenewcommand\textproc{}%
\let\oldthempfootnote\thempfootnote
\def\thempfootnote{\text{\oldthempfootnote}}
\setlist[itemize]{leftmargin=*}
\setlist[enumerate]{leftmargin=*}
\colorlet{lightgray}{green!4}
\newcommand{\mbf}[1]{\mathbf{#1}}
\NewDocumentCommand\bbm{}{ \begin{bmatrix} }
\NewDocumentCommand\ebm{}{ \end{bmatrix} }
\newcommand{\jk}[1]{}
\newcommand{\mg}[1]{}% {{\color{green}MG: #1}}
\title{\Large \bf Inverse Kinematics for Serial Kinematic Chains \\via Sum of Squares Optimization}
\author{Filip Mari\'c$^{a,b,\dagger }$, Matthew Giamou$^{a,c, \dagger}$, Soroush Khoubyarian$^a$, Ivan Petrovi\'c$^{b}$, and Jonathan Kelly$^a$
\thanks{ $^\dagger$ Denotes equal contribution.}
\thanks{ $^a$ Filip Mari\'c, Matthew Giamou, Soroush Khoubyarian, and Jonathan Kelly are with the University of Toronto, Institute for Aerospace Studies, Space and Terrestrial Autonomous Robotic Systems Laboratory, Canada. \{\texttt{<first name>.<last name>@robotics.utias.utoronto.ca}\}}% <-this % stops a space
\thanks{ $^b$ Filip Mari\'c and Ivan Petrovi\'c are with the University of Zagreb, Faculty of Electrical Engineering and Computing, Laboratory for Autonomous Systems and Mobile Robotics, Croatia. \{\texttt{<first name>.<last name>@fer.hr}\}}
\thanks{ $^c$ Vector Institute Postgraduate Affiliate and RBC Fellow.}
}
\begin{document} 
% Fixes the dashed-out author problem
\bstctlcite{IEEEexample:BSTcontrol}
\maketitle
%\thispagestyle{empty}
%\pagestyle{empty}

% Matt Pin:    182807
% Soroush Pin: 255878
% Jonathan Pin: 106824

% Paper headers
%\markboth{IEEE Robotics and Automation Letters. Preprint Version. Accepted TODO}{Maric \MakeLowercase{\textit{et al.}}: Inverse Kinematics for Serial Kinematic Chains via Sum of Squares Optimization} % Use only for final RAL version

%%%%%%%%%%%%%%%%%%%%%%%%%%%%%%%%%%%%%%%%%%%%%%%%%%%%%%%%%%%%%%%%%%%%%%%%%%%%%%%%
\begin{abstract}
Inverse kinematics is a fundamental challenge for articulated robots: fast and accurate algorithms are needed for translating task-related workspace constraints and goals into feasible joint configurations.
In general, inverse kinematics for serial kinematic chains is a difficult nonlinear problem, for which closed form solutions cannot easily be obtained.
Therefore, computationally efficient numerical methods that can be adapted to a general class of manipulators are of great importance. % to motion planning and workspace generation tasks.
 In this paper, we use convex optimization techniques to solve the inverse kinematics problem with joint limit constraints for highly redundant serial kinematic chains with spherical joints in two and three dimensions.
This is accomplished through a novel formulation of inverse kinematics as a nearest point problem, and with a fast sum of squares solver that exploits the sparsity of kinematic constraints for serial manipulators.
Our method has the advantages of post-hoc certification of global optimality and a runtime that scales polynomially with the number of degrees of freedom. 
Additionally, we prove that our convex relaxation leads to a globally optimal solution when certain conditions are met, and demonstrate empirically that these conditions are common and represent many practical instances. 
Finally, we provide an open source implementation of our algorithm. %and experiments.
\end{abstract}

%%%%%%%%%%%%%%%%%%%%%%%%%%%%%%%%%%%%%%%%%%%%%%%%%%%%%%%%%%%%%%%%%%%%%%%%%%%%%%%%
% Keywords appear just beneath the abstract. Use only for final RAL version.  

% Do these have to come from the list on the submission? 
% Use these for RAL submission
%\begin{IEEEkeywords}
%    Kinematics, Optimization and Optimal Control, Manipulation Planning
%\end{IEEEkeywords}

\section{Introduction} \label{sec:intro}
%
% Use this for final RA-L submission
%\IEEEPARstart{M}{any} 
Many common robots (e.g., manipulator arms and snake-like robots) can be modelled as \textit{kinematic chains}: rigid bodies connected by revolute joints that constrain robot motion to a specific workspace.
The motion of these robots may also be constrained by joint limits or user and task-specified workspace constraints on the positions or orientations of links.
Planning and controlling motion therefore requires solving the \textit{inverse kinematics} (IK) problem: finding configurations of the kinematic chain that satisfy a set of kinematic constraints.
A wide variety of techniques have been developed with the goal of solving IK for specific types of kinematic chains, such as manipulators with up to six degrees of freedom (DoFs).
However, generic solvers primarily rely on nonlinear optimization techniques, which typically solve the problem locally around an initial `seed' configuration.
Due to their local nature, these solvers cannot guarantee a feasible solution will be found, and as such may lead to the false conclusion that a problem is infeasible.

Kinematic chains are often parametrized using joint angles as variables, generating an IK problem comprised of nonlinear trigonometric equations.
However, alternative parameterizations exist that result in the kinematic equations taking on forms suitable for solution with a wider variety of mathematical tools.
Porta et al.~\cite{porta2005inverse} show that IK can be formulated as a special case of the \textit{Distance Geometry Problem} (DGP)~\cite{dattorro2010convex}, which consists of finding points that satisfy a given set of assigned distances.
Semidefinite programming (SDP) and sum of squares (SOS) relaxations are convex optimization techniques that have been used to solve the DGP in the domains of sensor network localization (SNL)~\cite{so2007theory, nie2009sum} and protein folding~\cite{alipanahi2013determining}.
In this paper, we demonstrate that the DGP in~\cite{porta2005inverse} can be represented as a \textit{quadratically constrained quadratic program} (QCQP), which can be extended to include other constraints such as joint limits. 
The main contributions of our work are:
 
\begin{enumerate} 
\item a polynomial formulation of IK with joint limit constraints, which admits provably tight SDP relaxations for problem instances which meet a criterion we characterize; 
\item a fast solution method for our formulation that uses a sparse SOS solver; and 
\item an open source implementation and experimental analysis of our algorithm in MATLAB.\footnote[1]{See \url{https://github.com/utiasSTARS/sos-ik} for code and supplementary material.}
\end{enumerate}

\begin{figure}
  \includegraphics[width=\columnwidth]{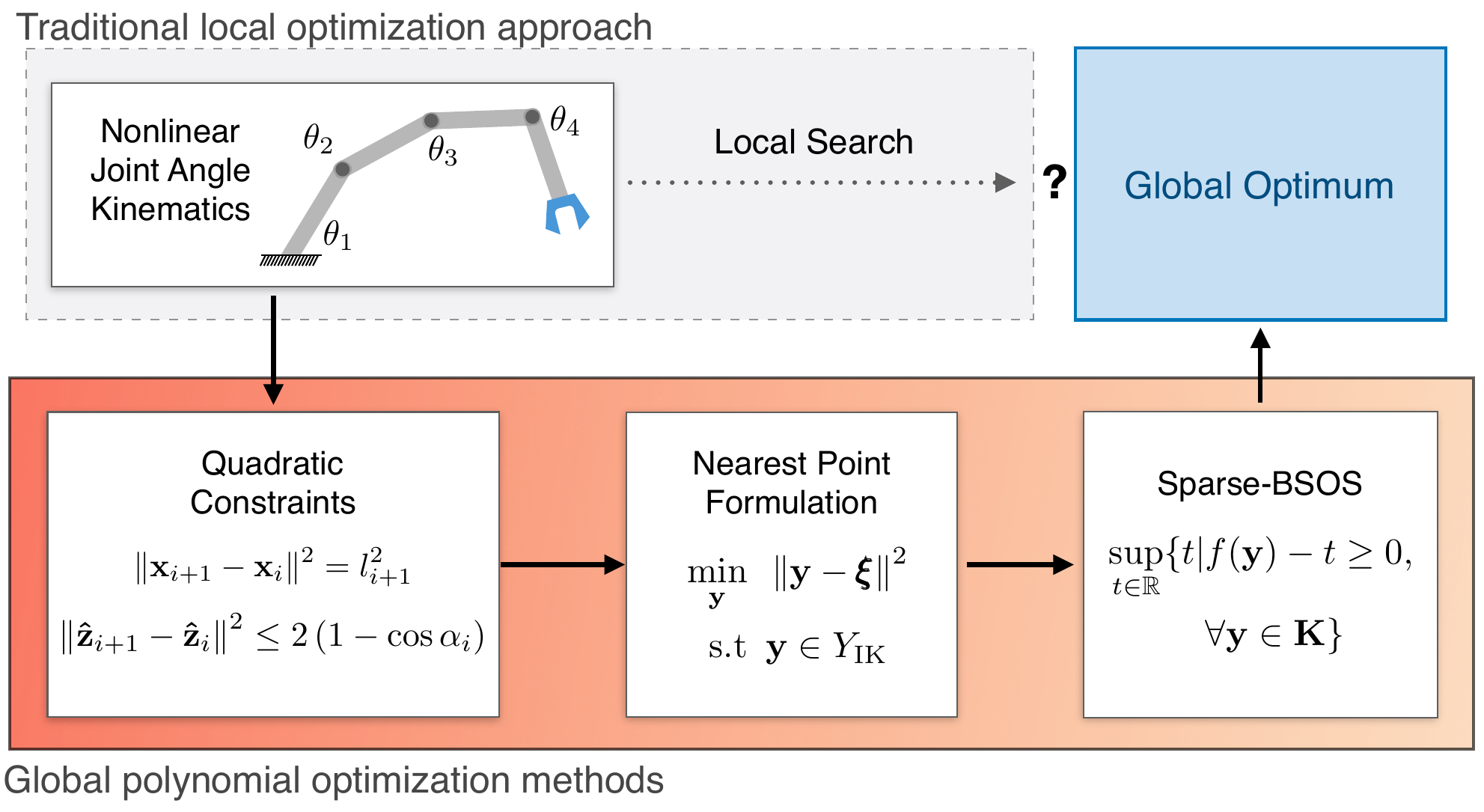}
  \caption{System diagram summarizing our approach. We parameterize manipulator kinematics with joint locations, leading to a QCQP problem formulation. This allows us to leverage  polynomial optimization techniques to obtain certifiably globally optimal solutions or certificates of infeasibility.} \label{fig:flow_chart}
\vspace{2mm}
\end{figure}

\section{Related Work} \label{sec:relatedWork}

In this section we review the two fields at whose intersection our paper lies: IK and global polynomial optimization.

\subsection{Inverse Kinematics}
Due to its widespread use, IK is a subject of intense research with an abundance of relevant literature that we can only briefly summarize in this paper; see \cite{aristidou2018inverse} for a recent, in-depth survey. 
Classical theoretical results~\cite{lee1988new} show that a general 6-DoF spatial kinematic chain has up to 16 configurations corresponding to any given feasible end-effector pose.
In such cases, closed form solutions can be found analytically using various parametrizations~\cite{manocha1994efficient,husty2007new,qiao2010inverse}, and software libraries such as \textit{IKFast}~\cite{diankov2010automated} can be used to rapidly generate feasible configurations.
One downside of these approaches is that they only account for other kinematic constraints (such as joint limits) as a post-processing step.
Moreover, for spatial (planar) kinematic chains with more than six (three) DoFs there exist an infinite number of solutions for any given end-effector pose, which means the state space for the redundant DoFs needs to be enumerated in a discrete and computationally expensive fashion.

When solutions cannot be obtained analytically, numerical methods are often used.
So-called closed-loop IK (CLIK) techniques use the Jacobian's (pseudo)inverse to apply differential kinematics in a closed-loop fashion, viewing IK as a feedback control problem~\cite{sciavicco1986coordinate}.
Moreover, CLIK methods using variants of damped least squares~\cite{buss2005selectively} and null-space optimization~\cite{nakamura1987task} provide numerically stable redundancy resolution for up to several hierarchical criteria.
Non-convex nonlinear optimization techniques such as sequential quadratic programming~\cite{schulman2014motion} iteratively generate convex subproblems, which can be efficiently solved.
These numerical methods do not provide any global optimality or feasibility guarantees, however, and therefore may require a large number of initializations to retrieve a feasible solution.
%
%Moreover, their performance scales terribly as the number of redundant DoFs increases.
%
The authors of~\cite{dai2017global} use a piecewise-convex relaxation of the SO(3) group to formulate the constrained IK problem as a mixed integer linear program (MILP).
They show that, unlike local optimization, their method requires no initialization and can provide a global certificate of infeasibility when a solution cannot be found.
Without approximating the SO(2/3) group,  our approach utilizes the theoretical result from~\cite{cifuentes2017local} to derive a tight SOS relaxation of IK for planar and spatial spherical kinematic chains, while retaining the ability to certify the global infeasibility of the problem.
%
%Unlike the aforementioned method, our approach utilizes the theoretical result from~\cite{cifuentes2017local} to derive a tight SOS relaxation of IK for planar and spatial spherical kinematic chains, which can also provide infeasibility certificates.
%
Moreover, our SOS relaxation leverages the innate sparsity pattern of the kinematic model to efficiently scale to a high number of DoF.
Finally, in~\cite{blanchini_inverse_2015} and~\cite{blanchini_convex_2017}, the authors analyze a convex formulation of inverse kinematics that is similar to our planar case but uses a linear cost function.

\subsection{Global Polynomial Optimization}

Optimization problems with polynomial cost and constraints are amenable to a host of convex relaxations that provide globally optimal solutions or bounds on the minimum cost. These methods include SDP relaxations for QCQPs~\cite{ma2010semidefinite}, as well as the broader class of global polynomial optimization techniques known as SOS programming~\cite{parrilo2003semidefinite, lasserre2001global}. While SOS programming technically involves solving an SDP, we will reserve the term ``SDP relaxation" for relaxations of the form described in Section~\ref{sec:proof}. Convex relaxations of polynomial optimization problems have found success in applications spanning signal processing, finance, control theory, and state estimation~\cite{boyd2004convex, lasserre2010moments, olsson2008solving}.
 
The literature on convex relaxations for SNL is closely related to the techniques developed in this paper. Most work focuses on the performance of SDP formulations for localization problems where noisy measurements of inter-sensor distances are provided~\cite{biswas2006semidefinite}. In this paper, we seek to solve inverse kinematics, where `measurements' are the distances between points of the kinematic chain. Thus, our approach is more closely related to the analyses of the noise free SNL problem found in~\cite{so2007theory} and~\cite{nie2009sum}, which demonstrate tightness of SDP and SOS relaxations, respectively. However, our work employs joint angle limits unique to robotics problems such as manipulation and uses a novel nearest-point formulation that has theoretical guarantees for relaxation tightness~\cite{cifuentes2017local}. 

Convex relaxations of polynomial optimization problems have been utilized in a variety of planning and control algorithms. In~\cite{deits2015computing}, convex obstacle-free regions of space in cluttered environments are efficiently generated via two alternating optimizations. This approach is used in~\cite{deits2015efficient} and~\cite{landry2016aggressive} to create SOS constraints enforcing collision-free trajectories in a mixed-integer planning approach to quadrotor flight. In~\cite{paden2017verification}, admissible heuristics for kinodynamic path planning problems are constructed with a SOS approximation method. In~\cite{jasour2015semidefinite}, chance-constrained formulations of optimization problems are introduced and solved via SOS programming. These methods are applied to problems in robotic motion planning and control to design trajectories  with bounded collision probabilities~\cite{jasour2016convex, jasour2018moment}.  Our algorithm, while not solving the entirety of a path planning or control problem, is complementary to these works and holds promise as a means of extending various planning methods to complex, high-dimensional kinematic models.

\section{Kinematic Model}\label{sec:problem_formulation} 
\begin{figure}
  \centering
  \resizebox{.8\columnwidth}{!}{\tikzset{
    position/.style args={#1:#2 from #3}{
        at=(#3),  shift=(#1:#2)
    }
  }
\begin{tikzpicture}
  [
    scale=3,
    >=stealth,
    point/.style = {draw, circle,  fill = black, inner sep = 2.5pt},
    bpoint/.style = {draw, circle,  fill = black, inner sep = 2.5pt},
    dot/.style   = {draw, circle,  fill = black, inner sep = .2pt}
  ]

  \definecolor{lightgr}{RGB}{220,220,220}
  \definecolor{lightbl}{RGB}{25,94,194}
  % the chain
  \def\rad{1}
  \def\link{\draw [double distance=1.5mm, very thick] (0,0)--}
  \def\centerarc[#1](#2)(#3:#4:#5)% Syntax: [draw options] (center) (initial angle:final angle:radius)
    { \draw[#1] ($(#2)+({#5*cos(#3)},{#5*sin(#3)})$) arc (#3:#4:#5); }

  \node (origin) at (0,0) [point, label = {left:\large{$\mathbf{x}_0$}}]{};
  \node [point, position=80:1.5 from origin, label = {above left:\large{$\mathbf{x}_1$}}] (n1) {};
  \node [point, position=40:3 from n1, label = {above:\large{$\mathbf{x}_{i-1}$}}] (n2) {}; 
  \node [point, position=-80:2 from n2, label = {above right:\large{$\mathbf{x}_{i}$}}] (n3) {};
  \node [point, position=-20:2.5 from n3, label = {below:\large{$\mathbf{x}_{i+1}$}}] (n4) {};
  \node [point, position=20:1.5 from n4, label = {below right:\large{$\mathbf{x}_{N-1}$}}] (n5) {};
  \node [point, position=90:1.5 from n5, label = {right:$\mathbf{x}_{N}$}] (n6) {};
  \node [point, fill=lightbl, draw=lightbl, opacity = 0.5, position=160:0.4 from n6, label = {above:$\mathbf{a}_k$}] (goal) {};
  \node [position=-80:2 from n3] (nj) {};
  \node [position=-80:1 from n3, label = {above right:\large{$\theta_i$}}] (nj2) {};

  \fill[fill=lightbl, opacity=0.2] (goal) circle [radius=0.22]; 

  % joint limit line 
  %\node [position=-80:2.5 from n3] (na) {};

  \draw[line width=8pt, draw=lightgr] (origin) -- (n1);
  \draw[] (origin) -- (n1);
  \draw[dash pattern=on \pgflinewidth off 10pt,ultra thick] (n1) -- (n2);
  \draw[line width=8pt, draw=lightgr] (n2) -- (n3);
  \draw[] (n2) -- (n3);
  \draw[line width=8pt, draw=lightgr] (n3) -- (n4);
  \draw[] (n3) -- (n4);
  \draw[dash pattern=on \pgflinewidth off 8pt,ultra thick] (n4) -- (n5);
  \draw[line width=8pt, draw=lightgr] (n5) -- (n6);
  \draw[] (n5) -- (n6);
  \draw[draw=lightbl, line width=0.1pt] (n3) -- (nj);
  \centerarc[draw=lightbl, line width=0.1pt](n3)(-80:-20:0.4)

\end{tikzpicture}}
  \caption{A kinematic chain comprised of $N-1$ spherical joints, and a virtual joint at the endpoint. Note that spherical joints in two dimensions correspond to revolute joints with a single rotation axis. The dotted lines represent links which are not shown.
    All constraints imposed on the joint $\mathbf{x}_i$ are a function of its position and the positions of nearby joints.
    The vertex $\mathbf{x}_N$ representing the endpoint is constrained to a ball around the point $\mathbf{a}_k$.}
  \label{fig:model_1}
  \vspace{2mm}
\end{figure}
In this section, we build on~\cite{porta2005inverse} to devise a model of the kinematic chain for which the IK problem can be formulated as a QCQP.
The resulting QCQP admits an SDP relaxation~\cite{ma2010semidefinite}, for which we provide sufficient conditions for tightness in Section~\ref{sec:proof}.
We also comment on the sparsity pattern of the constraints, which allows us to derive an efficient sparse SOS relaxation  (see Section~\ref{sec:rip} for details).
For clarity, we restrict our analysis to planar and spatial serial kinematic chains with $N$ spherical joints connected by $N$ straight rigid links.
However, the model presented here admits complex kinematic chains that include industrial and parallel manipulators~\cite{porta2005inverse}.

We begin by representing key points in the kinematic chain as vertices of a graph embedded in $\mathbb{R}^{d \geq 2}$.
In Figure~\ref{fig:model_1}, we can see that each joint of the chain is represented by a vertex $\mathbf{x}_i\,\in \mathbb{R}^{d\geq 2}\,, i = 1, 2, \dots, N$, with the vertices $\mathbf{x}_0$ and $\mbf{x}_N$ corresponding to the base and the endpoint respectively. 
Note that the full set of joint angles can be geometrically recovered from this representation.
\subsection{Distance Constraints}\label{sec:distance_constraints}
We can restrict the distance between two vertices
$\mathbf{x}_{i}$ and $\mathbf{x}_{j}$ to some range $D_{ij} = \left[D_{ij_{min}}, D_{ij_{max}}\right]$ by introducing the non-convex quadratic constraint,
\begin{equation}\label{eq:vertex_dist_constraint}
  D_{ij_{max}}^2 \geq \|\mathbf{x}_i-\mathbf{x}_j\|^2  \geq  D_{ij_{min}}^2\, ,
\end{equation}
which can be used to model the kinematic chain structure by constraining the distances between joints.
We can model rigid links by reducing the range $D_{i,i+1}$ to a single value $l_{i+1}$, corresponding to the length of the link between two consecutive joints at $\mathbf{x}_{i+1}$ and $\mathbf{x}_{i}$.
This results in an equality constraint, which restricts the distance between two joints to match the length of the link connecting them:
\begin{align}\label{eq:link_constraint}
  \begin{split}
  l_{i+1}^2 \geq& \|\mathbf{x}_{i+1}-\mathbf{x}_{i}\|^2  \geq  l_{i+1}^2\, \\
  \Leftrightarrow \quad &\|\mathbf{x}_{i+1}-\mathbf{x}_{i}\|^2 = l_{i+1}^2.
  \end{split}
\end{align} 
Without additional constraints, the vertex $\mathbf{x}_{i+1}$ is restricted to an $S^{d-1}$ sphere of radius $l_{i+1}$, centered at $\mathbf{x}_{i}$.
This corresponds to an unconstrained spherical joint in $\mathbb{R}^{d}$.

\subsection{Position Constraints}\label{sec:position_constraints}
The distance of any vertex from a fixed point in $\mathbb{R}^{d}$ (or \textit{anchor}) $\mathbf{a}_k$ can be restricted to the range $D_{ik} = \left[D_{ik_{min}}, D_{ik_{max}}\right]$ using
\begin{equation}\label{eq:vertex_anchor_constraint}
 D_{ik_{max}}^2 \geq  \|\mathbf{x}_i-\mathbf{a}_k\|^2 \geq  D_{ik_{min}}^2.
\end{equation}
Similarly to Eq.~\eqref{eq:link_constraint}, collapsing the range $D_{ik}$ in Eq.~\eqref{eq:vertex_anchor_constraint} to zero restricts the position of a vertex $\mathbf{x}_i$ to the point $\mathbf{a}_k$:
\begin{align}\label{eq:position_constraint} 
  \begin{split}
  0 \geq& \|\mathbf{x}_{i}-\mathbf{a}_{k}\|^2  \geq 0\, \\
  \Leftrightarrow \quad &\|\mathbf{x}_{i}-\mathbf{a}_{k}\|^2 = 0. 
  \end{split} 
\end{align}
This allows us to define the base position by constraining $\mathbf{x}_0$, as well as the exact pose (position and orientation) of the final link by constraining $\mathbf{x}_{N-1}$ and $\mathbf{x}_{N}$.

\subsection{Angle Constraints}\label{sec:angle_constraints}
\begin{figure}
  \centering 
  \resizebox{0.45\columnwidth}{!}{\tikzset{
    position/.style args={#1:#2 from #3}{
        at=(#3),  shift=(#1:#2)
    }
  }
\begin{tikzpicture}
  [
    scale=1.25,
    >=stealth,
    ] 

  \definecolor{lightgr}{RGB}{220,220,220}
  \definecolor{lightbl}{RGB}{25,94,194}
  % the chain
  \def\rad{1}
  \def\link{\draw [double distance=1.5mm, very thick] (0,0)--}
  \def\centerarc[#1](#2)(#3:#4:#5)% Syntax: [draw options] (center) (initial angle:final angle:radius)
    { \draw[#1] ($(#2)+({#5*cos(#3)},{#5*sin(#3)})$) arc (#3:#4:#5); }
  
  \node (origin) at (0,0) [inner sep=0,outer sep=0, label ={ [xshift=9pt, yshift=-3pt]\tiny{$\color{lightbl} \theta_i$}}]{};

  \node (origin2) at (0,0) [inner sep=0,outer sep=0, label ={ [xshift=3pt, yshift=4pt]\tiny{$\color{red} \alpha_i$}}]{};
  
    \node (origin3) at (0,0) [inner sep=0,outer sep=0, label ={ [xshift=6pt, yshift=-11pt]\tiny{$\color{red} \alpha_i$}}]{};

  \node [position=0:0.5 from origin, inner sep=0, outer sep=0, label={[xshift=12pt, yshift=-12pt]\tiny{$\mbf{\hat{z}}_{i+1}$}}] (n1) {};

  %\node [position=25:1 from origin, inner sep=0, outer sep=0, label={[xshift=11pt, yshift=-5pt]\tiny{$ \color{lightbl}  \sqrt{ 2\left(1 - \cos\theta_i\right)} $}}] (nn) {};
  
  \node [position=50:0.5 from origin, inner sep=0,outer sep=0, label={[xshift=5pt, yshift=7pt]\tiny{$\color{lightbl} \mathbf{\hat{z}}_i$}}] (n2) {}; 

  \draw[line width=0.1pt, opacity = 0.5] (origin) circle [radius=1];
  \centerarc[lightbl](origin)(0:50:0.4)
  \centerarc[red](origin)(0:100:0.45)
  \centerarc[dashed,red, opacity=0.5](origin)(0:-100:0.45)

  % joint limit line 
  \node [position=-100:1 from origin, inner sep=0,outer sep=0] (na1) {};
  \node [position=100:1 from origin, inner sep=0,outer sep=0] (na2) {};

  \draw[->] (0,0) -- (0:1);
  \draw[lightbl,->] (0,0) -- (50:1);
  \draw[thin, lightbl] (0:1) -- (50:1);
  \draw[dashed, red, ultra thin, opacity=0.5] (0,0) -- (-100:1);
  \draw[red, ->] (0,0) -- (100:1);
  \draw[thin,red] (100:1) -- (0:1);

\end{tikzpicture}}
  \caption{Visualization of the convex angle constraint in Eq.~\eqref{eq:angle_constraint}. As both the vectors $\hat{\mbf{z}}_i$ and $\hat{\mbf{z}}_{i+1}$ are of unit length, the length $\left\lVert \hat{\mbf{z}}_i - \hat{\mbf{z}}_{i+1}\right\rVert$ depends only on the angle between them.}
  \label{fig:joint_limits}
  \vspace{2mm}
\end{figure}
The angle $\theta_i$ of any joint $\mathbf{x}_{i}$ with respect to its parent joint $\mathbf{x}_{i-1}$ is commonly limited by mechanism design. In Figure~\ref{fig:joint_limits}, the unit vectors $\hat{\mbf{z}}_i = \frac{1}{l_i} (\mbf{x}_i - \mbf{x}_{i-1})$ and $\hat{\mbf{z}}_{i+1}$ are related to joint angle $\theta_i$ and limit $\alpha_i$. Applying the cosine law leads to the equivalence
\begin{align}\label{eq:angle_constraint} 
\begin{split} 
	&|\theta_i| \leq \alpha_i \\   
	\Leftrightarrow \quad &\left\lVert \hat{\mbf{z}}_{i+1} - \hat{\mbf{z}}_{i}\right\rVert^2 \leq
  2\left(1 - \cos\alpha_i\right), 
\end{split}
\end{align}
which can be used to enforce joint limit constraints symmetric with respect to the previous link, as shown in Figures~\ref{fig:model_1} and~\ref{fig:joint_limits}.
In~\cite{blackmore2006optimal}, it is noted that quadratic constraints can also be used for non-symmetric angle ranges smaller than $180^{\circ}$

Note that the constraints described in this section form a sparsity pattern: the position of any joint $\mathbf{x_{i}}$ only appears in constraints  with nearby joints $\mathbf{x_{i-k}}$ and $\mathbf{x_{i+k}}$ for $k \leq 2$.
In Section~\ref{sec:rip} we explain how this sparsity can be exploited by an SOS solver to efficiently find IK solutions.
This kinematic model can also be extended to include other quadratic constraints such as collision avoidance, which we plan to explore in future work. 

\section{Inverse Kinematics Formulation} \label{sec:inverseKinematics}

\newcommand{\fJacobian}{\nabla f(\bar{\pmb{\xi}}))}
\newcommand{\acq}{\mathrm{rank}(\fJacobian) = n - \mathrm{dim}_{\bar{\pmb{\xi}}} Y} 

In this section, we cast IK as an optimization problem seeking the feasible configuration whose joints are closest to some target positions in the workspace. This \emph{nearest point} formulation of IK allows us to prove Theorem \ref{thm:stability}, which sheds light on the globally optimal performance of our convex relaxations. 

\subsection{Algebraic Variety of Feasible Configurations} \label{sec:algebraicVariety}
\newcommand{\kinematicsY}{Y_{\text{IK}}}
The \emph{variety} Y of a set of polynomial equations $f_i(\mbf{y}) = 0$ is the set of real-valued solutions satisfying those equations:
\begin{equation}
	Y \coloneqq \{\mbf{y} \in \mathbb{R}^n : f_1(\mbf{y}) = \cdots = f_m(\mbf{y}) = 0\}.
\end{equation}
In order to define the set of all kinematically feasible $N$-link chains that connect the origin in $\mathbb{R}^d$ to a desired end position $\mathbf{x}_N$ as a variety, we need to express the inequalities representing angle constraints from Section \ref{sec:angle_constraints} as equalities. To this end, we introduce $N$ auxiliary variables $s_i$ \cite{park2017general} and note that any inequality constraints satisfy the equivalence 
\begin{equation} 
f_i(\mathbf{x}) \leq 0 \iff f_i(\mathbf{x}) + s_i^2 = 0.
\end{equation} 
We will use $\mbf{x} \in \mathbb{R}^{d(N-1)}$ to denote the concatenation of `interior' joints $\mbf{x}_i,\ i=1,\ldots, N-1$, and $\mbf{s} \in \mathbb{R}^N$ to denote the column vector of auxiliary $s_i$ variables.
We can now define our kinematically feasible set as the algebraic variety
\begin{equation}
\kinematicsY \coloneqq \{\mbf{y} \in \mathbb{R}^n : g_i(\mbf{x}) = h_i(\mbf{y}) = 0, i=1, \ldots, N \}, \\
\end{equation}
 where $n = d(N-1)+N$ and $\mbf{y} = [\mbf{x}^T \ \mbf{s}^T]^T$, and
 \begin{align}
 \begin{split}
 	g_i(\mbf{x}) &= \|\mathbf{x}_{i}-\mathbf{x}_{i-1}\|^2 -l_{i}^2, \\
 	h_i(\mbf{y}) &= \left\lVert \hat{\mbf{z}}_{i+1} - \hat{\mbf{z}}_{i}\right\rVert^2 + s_i^2 -
  2\left(1 - \cos\alpha\right).
 \end{split}
 \end{align}
 To summarize, the variety $\kinematicsY$ is the feasible set for a particular instance of IK parameterized by the number of DoFs $N$, the link lengths $l_i$, the angle limits $\alpha_i$, and the target pose of the final link $\mbf{x}_N$. This formulation assumes, without loss of generality, that $\mbf{x}_0 = \mbf{0}$.
 \subsection{Nearest Point Problem}
 For redundant manipulators, $\kinematicsY$ contains infinitely many solutions for nearly all target positions $\mbf{x}_N$.  The set described by variety $\kinematicsY$ is high-dimensional and nonconvex. In order to find solutions, we will cast IK as the problem of finding the \emph{nearest point} $\mbf{y} \in \kinematicsY$ to some reference point $\pmb{\xi} \in \mathbb{R}^{n}$. Since the squared Euclidean distance is used for the cost, and $\kinematicsY$ is a quadratic variety, this allows us to cast IK as a quadratically constrained quadratic program (QCQP):
\begin{align} \label{prob:nearestPoint}
\begin{split}
	\underset{\mbf{y}}{\min} \enspace & \left\lVert\mbf{y} - \pmb{\xi}\right\rVert^2, \\
	\text{s.t} \enspace & \mbf{y} \in \kinematicsY, 
\end{split}
\end{align}
where $\pmb{\xi} = [\mbf{x}_0^T \ \mbf{s}_0^T]^T \in \mathbb{R}^{d(N-1) + N}$. Note that the cost also includes the squared distance between the auxiliary variables $s_i$ and their reference points $s_{0,i}$ in $\pmb{\xi}$.

\subsection{Proof of Strong Duality} \label{sec:proof}
In this section, we prove that for many instances of the problem in Eq.~\ref{prob:nearestPoint} (Problem \ref{prob:nearestPoint}), the convex SDP relaxation is tight, and therefore we can find a global optimum of Problem \ref{prob:nearestPoint} in polynomial time with interior point solvers. We refer the reader to \cite{boyd2004convex} and \cite{cifuentes2017local} for detailed discussions of tightness, strong duality, and SDP relaxations. 
We begin by stating Theorem \ref{thm:nearestPoint} from \cite{cifuentes2017local}, which is a general result for nearest-point problems in polynomial optimization. 
\begin{theorem}[Nearest Point to a Quadratic Variety \cite{cifuentes2017local}] \label{thm:nearestPoint}
	Consider the problem 
	\begin{equation}
		\underset{\mbf{y} \in Y}{\min} \left\lVert\mbf{y} - \pmb{\xi}\right\rVert^2,
	\end{equation}
	where $Y \coloneqq \{\mbf{y} \in \mathbb{R}^n : f_1(\mbf{y}) = \cdots = f_m(\mbf{y}) = 0\}$,  $f_i$ quadratic.
	Let $\bar{\pmb{\xi}} \in Y$ be such that 
	\begin{equation}
		\acq.
	\end{equation}
	Then there is zero-duality-gap for any $\pmb{\xi} \in \mathbb{R}^n$ that is sufficiently close to $\bar{\pmb{\xi}}$.  
\end{theorem}

Note that Theorem \ref{thm:nearestPoint} references the duality gap, which is related to the Lagrangian dual relaxation. We use the fact that the zero-duality-gap property, also called \emph{strong duality}, implies tightness of the SDP relaxation \cite{cifuentes2017local}.  Since having access to an admissible $\bar{\pmb{\xi}} \in \kinematicsY$ amounts to having solved the IK problem already, we would like to be able to use an SDP relaxation of Problem \ref{prob:nearestPoint} to obtain a valid solution by using a reference point $\pmb{\xi} \notin \kinematicsY$. In general, most nonconvex problems do not exhibit strong duality. Problem \ref{prob:nearestPoint} contains non-convex link length constraints, making the existence of tight SDP relaxations a non-trivial and useful property.

\begin{theorem}[Strong Duality] \label{thm:stability}
	If $\bar{\pmb{\xi}} \in \kinematicsY$ does not represent a fully extended configuration (i.e., the joint positions are not all collinear), and does not have any joints at their angular limits for specified base and goal positions, then Problem \ref{prob:nearestPoint} exhibits strong duality for all $\pmb{\xi}$ sufficiently close to $\bar{\pmb{\xi}}$.   
\end{theorem}

The proof of Theorem \ref{thm:stability} can be found in our supplementary material. In Section \ref{sec:experiments}, we demonstrate that the tight-relaxation region for instances of Problem \ref{prob:nearestPoint} is substantial and randomly sampling $\pmb{\xi}$ is a practical strategy. \jk{mention later that we can characterize the region exactly as future work?}

\section{SOS Programming}\label{sec:SOS}
Sum of squares (SOS) programming is an approach for solving polynomial optimization problems with convex optimization. The standard SOS relaxation hierarchy \cite{parrilo2003semidefinite, lasserre2001global} is equivalent to the Lagrangian dual relaxation with particular redundant constraints added \cite{cifuentes2017local}. These redundant constraints can only make the relaxation tighter. Combined with the fact that the Lagrangian dual relaxation is a lower bound of the SDP relaxation of Problem \ref{prob:nearestPoint}, this tells us that the standard SOS hierarchy shares the stability property proved in Theorem \ref{thm:stability}. This work uses the Sparse-BSOS method of \cite{weisser2018sparse}, a recent sparse extension of \cite{lasserre2017bounded}, which introduced a SOS hierarchy that is less computationally costly than the standard SOS hierarchy in many cases, while remaining just as tight for QCQPs.

\subsection{Sparse-BSOS}\label{subsec:SBSOS}
For a complete treatment of the Sparse-BSOS hierarchy, please refer to \cite{weisser2018sparse}. Briefly, we are interested in solving Problem \ref{prob:nearestPoint} in the equivalent form
\begin{equation}
t^\star = \underset{t \in \mathbb{R}}{\sup} \{t | f(\mbf{y}) - t \geq 0, \ \forall \mbf{y} \in \mbf{K} \},
\end{equation}
where $\mbf{K} = \{\mbf{y} \in \mathbb{R}^{n} | 0 \leq g_j(\mbf{y}) \leq 1, \ j = 1, \ldots, m \}$ is a semialgebraic set equivalent to $\kinematicsY$ in Section \ref{sec:algebraicVariety}, and $f(\mbf{y}) = \left\lVert\mbf{y} - \pmb{\xi}\right\rVert^2$. The key insight of SOS optimization is that this problem (and other polynomial optimization problems) can be solved as a semidefinite program (SDP) with \emph{Positivstellensatz} results from real algebra \cite{lasserre2010moments, parrilo2003semidefinite}. Many SOS relaxation hierarchies have been developed, but we use the sparse bounded-degree SOS (Sparse-BSOS) hierarchy of \cite{weisser2018sparse} because it leverages the natural sparsity of kinematic chains. The method enforces $f(\mbf{y}) - t \geq 0$ by introducing the function 
\begin{equation}
\begin{aligned}
h_d(\mbf{y}, \pmb{\lambda}) &= \sum_{\alpha, \beta \in \mathbb{N}^m}^{|\alpha|_1 + |\beta|_1 \leq d} \lambda_{\alpha\beta} h_{d, \alpha\beta}(\mbf{y}),\\
h_{d,\alpha \beta}(\mbf{y}) &\vcentcolon = \prod_{j=1}^m g_j(\mbf{y})^{\alpha_j}(1 - g_j(\mbf{y}))^{\beta_j}, \ \mbf{y} \in \mathbb{R}^{n},
\end{aligned}
\end{equation}
where $\pmb{\lambda}$ contains the coefficients $\lambda_{\alpha\beta} \geq 0$ indexed by $\alpha$ and $\beta$, and the parameter $d$ allows us to restrict the number of monomials used to construct $h_d$. Now we seek to optimize 
\begin{equation}
t^\star = \underset{t, \pmb{\lambda}}{\sup} \{t | f(\mbf{y}) - t - h_d(\mbf{y}, \pmb{\lambda}) \geq 0, \ \forall \mbf{y}, \pmb{\lambda} \geq 0 \},
\end{equation} 
where $h_d(\mbf{y}, \pmb{\lambda}) > 0$ when $\mbf{y} \in \mbf{K}$ (see \cite{lasserre2017bounded} for details). Next, the problem is converted to an SDP by restricting the search to $\Sigma[\mbf{y}]_k$, the set of SOS polynomials of degree at most $2k$, which constitute a subset of nonnegative polynomials:
\begin{equation}\label{eq:sparse-bsos}
q_d^k = \underset{t, \pmb{\lambda}}{\sup} \{t | f(\mbf{y}) - t - h_d(\mbf{y}, \pmb{\lambda}) \in \Sigma[\mbf{y}]_k, \ \forall \mbf{y}, \pmb{\lambda} \geq 0 \}.
\end{equation} 
Each $q_d^k$ describes a level of the BSOS hierarchy indexed by $d$ and $k$ \cite{lasserre2017bounded}. Since Problem \ref{prob:nearestPoint} is a QCQP, $k=1$ in our use of the solver. 
Finally, to produce the Sparse-BSOS hierarchy we partition Problem \ref{eq:sparse-bsos} into smaller blocks of variables and relevant constraints. These subsets of variables must satisfy a sparsity property called the running intersection property (RIP). 

\subsection{The Running Intersection Property} \label{sec:rip}
\mg{Add notation for indexed subsets used here.} In order for the Sparse-BSOS hierarchy to converge to the global optimum as $d \rightarrow \infty$, the variables and functions involved must satisfy a sparsity property called the running intersection property (RIP) \cite{weisser2018sparse}. In describing the RIP we will use $[k]$ to denote the set $\{1, 2, \ldots, k\}$ compactly. The RIP holds if there exists $p \in \mathbb{N}$ and subsets $I_l \subseteq [n]$ and $J_l \subseteq [m]$ for all $l \in [p]$ such that:
\begin{itemize}
\item $f = \sum_{l=1}^p f^l$, for some $f^1, \ldots, f^p$ such that \\ $f^l \in \mathbb{R}[\mathbf{x}; I_l], \ l\in [p]$;
\item $g_j \in \mathbb{R}[\mathbf{x}; I_l]$ for all $j \in J_l$ and $l \in \{1, \ldots, p\}$; 
\item $\bigcup_{l=1}^p I_l = [n]$; 
\item $\bigcup_{l=1}^p J_l = [m]$;
\item for all $l \in [p-1]$ there exists $s \leq l$ such that \\ $(I_{l+1} \cap \bigcup_{r=1}^l I_r) \subseteq I_s$. 
\end{itemize}
For the case of a 2D manipulator with only link length constraints, the partition consisting of overlapping pairwise joints satisfies the RIP. When joint limit constraints are introduced, overlapping triplets of joints are required. \mg{Short proof here? Probably not needed.} For redundant manipulators with many links, this partition amounts to an SDP with far fewer variables and constraints than a standard SDP or SOS relaxation would generate. The SDP produced by Sparse-BSOS has semidefinite constraints on variables of size $O(n^\star)$ for $k=1$, where $n^\star = \max_l n_l$ and $n_l$ is the number of variables in $I_l$ \cite{weisser2018sparse}. For a $d$-dimensional manipulator using our nearest-point formulation of IK with a partition that satisfying the RIP, $n^\star = 3d + 1$. Using the Sparse-BSOS hierarchy therefore requires less memory and runtime as compared with its dense equivalent, whose semidefinite constraint variables would be of size $O(dN)$ (where $N$ is the number of degrees of freedom as in prior sections). Our entire algorithm is summarized in Figure~\ref{fig:flow_chart}.

\section{Experiments} \label{sec:experiments}
\newcommand{\rankOne}[1]{\mathcal{R}_1(#1)}
In this section, we present IK solutions for simulated planar (2D) and spatial (3D) manipulators.
All experiments were conducted with a MATLAB implementation of our approach on a computer with a 2.2GHz Intel Core i7-8750H CPU.
Please refer to the supplementary material for the kinematic chain link lengths and angle limits used in our experiments.
The primary purpose of our experiments is to explore our global method and its theoretical guarantee (Theorem \ref{thm:stability}).
We recognize that there may be local solvers that are competitive in some instances, but the focus of our work is                                                                                                                                                                                                                                                                                                                                                                                                           on the global optimality properties of convex optimization methods.
In order to keep our focus concise in this paper, we consider a full and thorough comparison that includes experiments on physical robots as part of a future, more comprehensive work.

\subsection{Global Optimality}\label{sec:glopt}

\begin{figure*}[t!]
    \centering
%    \subfloat[$\rankOne{\pmb{\xi}_1}$]{\includegraphics[height=2in]{fig/fig1}}
    \subfloat[$\rankOne{\pmb{\xi}_1}$]{
    
    \begin{overpic}[height=1.8in]{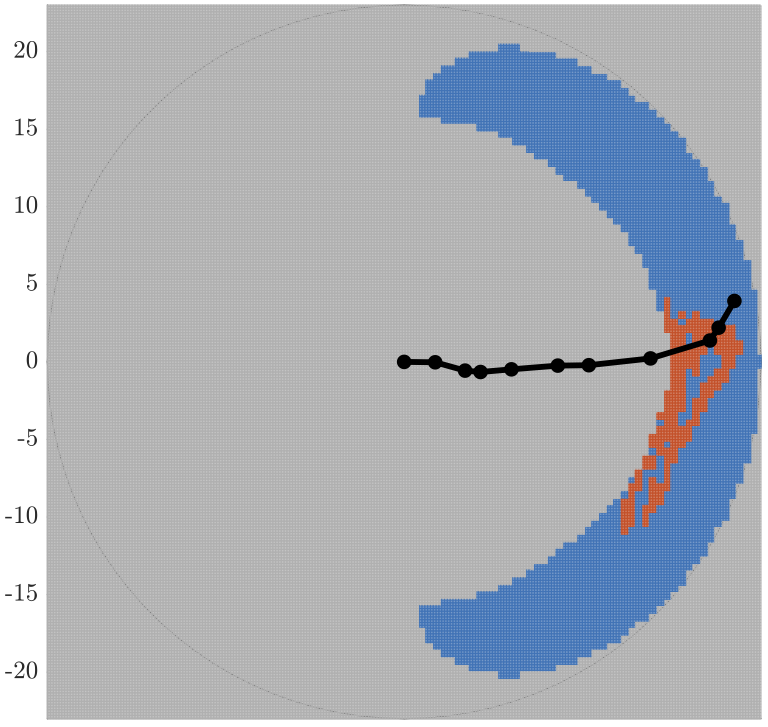}
     	\put(10,65){\includegraphics[height=0.5in]{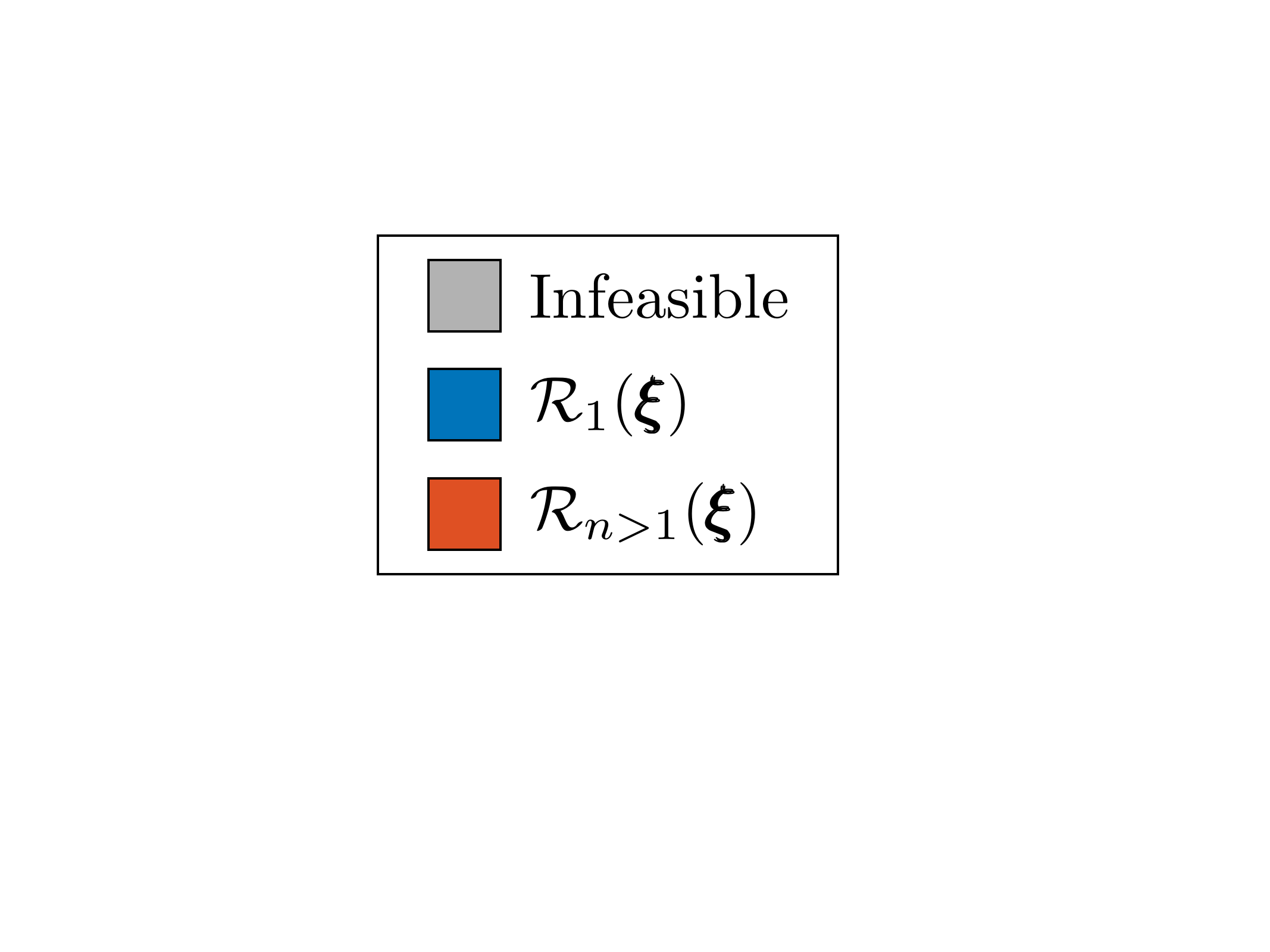}}  
  	\end{overpic}
    }
    \label{fig:rank1A}
    ~ 
    \subfloat[$\rankOne{\pmb{\xi}_2}$]{\includegraphics[height=1.8in]{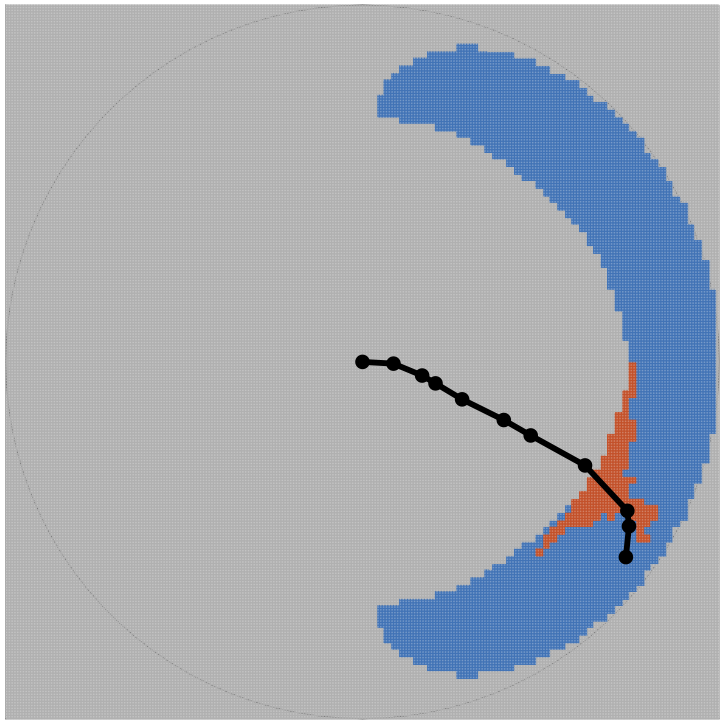}}
    \label{fig:rank1B}
    ~ 
    \subfloat[$\rankOne{\pmb{\xi}_3}$]{\includegraphics[height=1.8in]{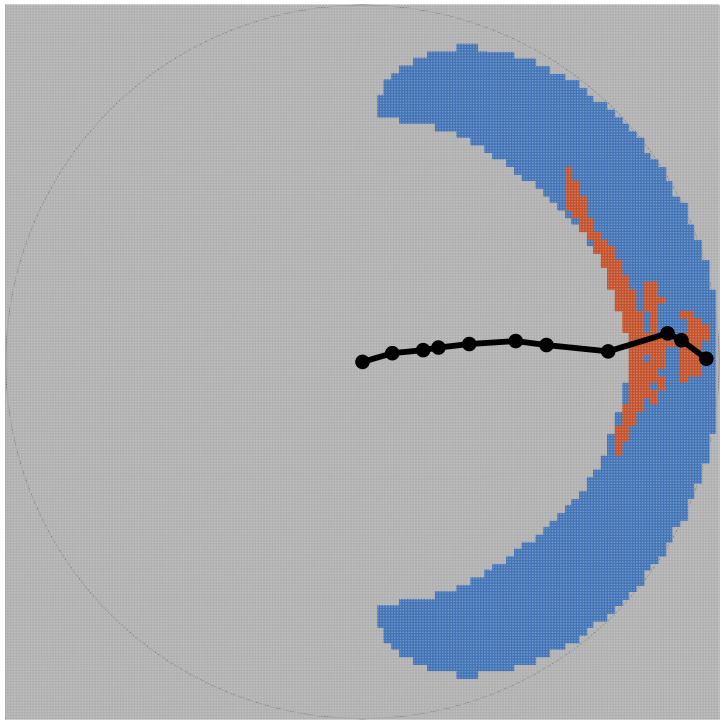}}
    \label{fig:rank1C}
    \\ \vspace{0.01in}	
    \hspace{0.001in}
    \subfloat[$\rankOne{\pmb{\xi}_4}$]{\includegraphics[height=1.88in]{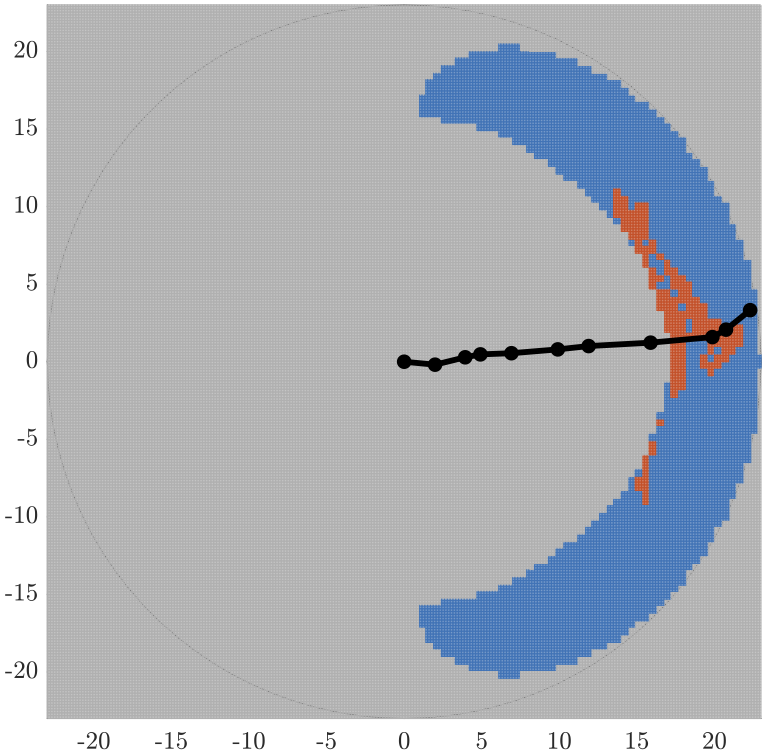}}
    \label{fig:rank1D}
    \hspace{0.01in} 
    ~ 
    \subfloat[$\rankOne{\pmb{\xi}_5}$]{\includegraphics[height=1.88in]{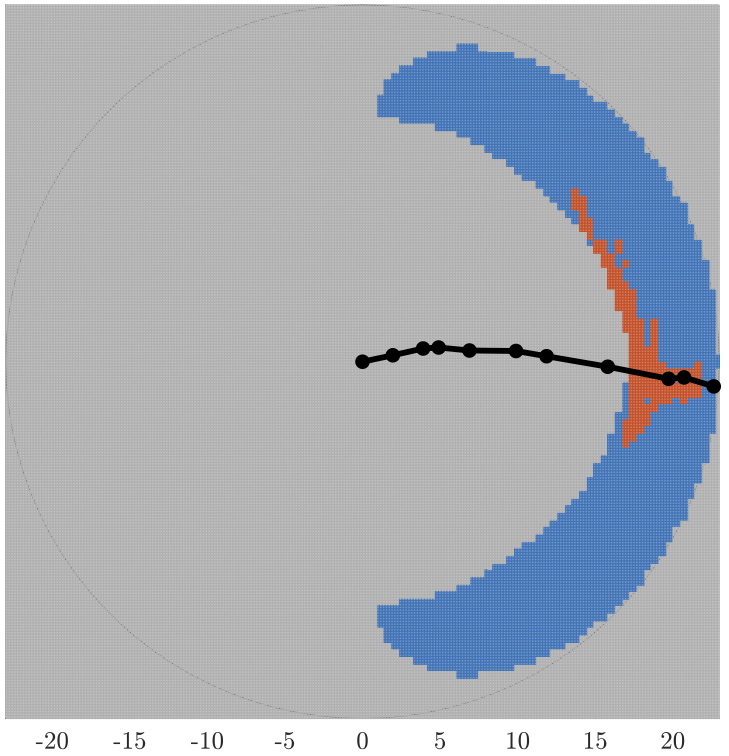}}
    \label{fig:rank1E}
    ~ 
    \subfloat[$\bigcup\limits_{i=1}^5 \rankOne{\pmb{\xi}_i}$]{\includegraphics[height=1.88in]{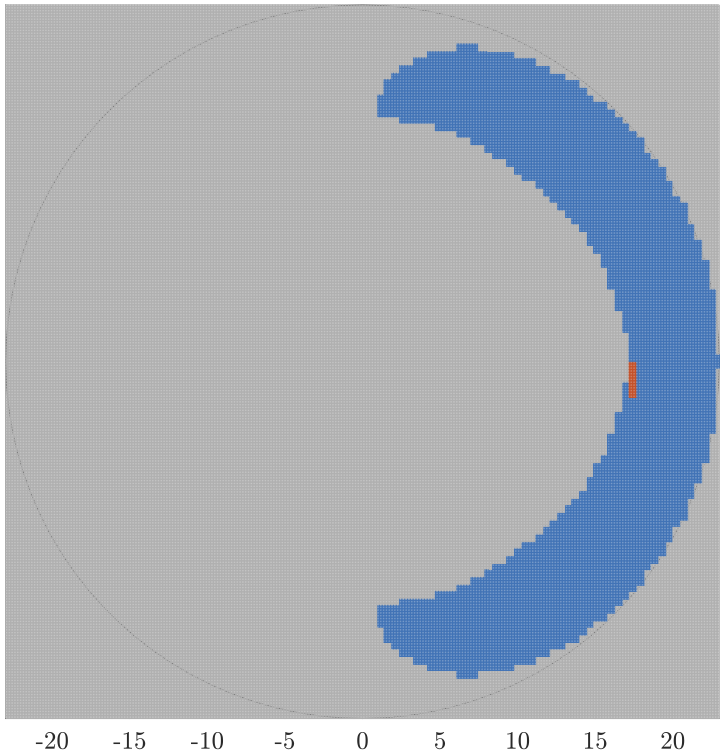}}
    \caption{Heatmaps demonstrating rank-1 regions $\rankOne{\pmb{\xi}}$ in target end-effector space  in blue for 5 different randomly sampled reference configurations $\pmb{\xi}_i$. Red regions indicate rank greater than 1, and grey indicates infeasible goal positions. In heatmaps (a)-(e), the joint positions for $\pmb{\xi}_i$ are plotted in black. Heatmap (f) displays the union of the rank-1 regions for all 5 reference configurations. See the supplementary material for manipulator parameters. \jk{can figures be a bit smaller? would solve space issue - also maybe change grey background to white}}
    \label{fig:unionRank}
    \label{fig:rankPlots}
\end{figure*}

In Figure~\ref{fig:rankPlots}, we display the rank-1 region of goal space in blue, which we denote $\rankOne{\pmb{\xi}}$ for various randomly sampled reference points $\pmb{\xi}_i$ (drawn in black). 
The red regions corresponds to goals that led to Sparse-BSOS solutions with rank greater than 1, whereas grey areas contain goals found to be infeasible.
Since Sparse-BSOS solves a relaxation of the primal problem in~\ref{prob:nearestPoint}, infeasibility in the relaxed problem's solution provides a certificate of infeasibility for the primal.
Additionally, the existence of a rank-1 solution is a certificate of optimality for the extracted solution.
Figure~\ref{fig:unionRank} displays the union of the rank-1 regions for all 5 randomly sampled $\pmb{\xi}_i$ and almost covers the entirety of the feasible goal space. 
  
\subsection{Inverse Kinematics for 2D and 3D Serial Chains}

\begin{table*}
 \begin{center}
   \caption{Performance of our method over 10,000 randomly generated feasible 2D (top) and 3D (bottom) pose goals.}
   \label{table:comparison_1}
   \resizebox{\textwidth}{!}{
   \begin{tabular}{cccccccccc}
     \toprule
     DoF & \multicolumn{2}{c}{5} & \multicolumn{2}{c}{7} & \multicolumn{2}{c}{10} & \multicolumn{2}{c}{12}  \\
     \cmidrule(r){2-3}  \cmidrule(r){4-5}   \cmidrule(r){6-7} \cmidrule(r){8-9}
     Method                 &SOS-IK                                & \texttt{fmincon}                   & SOS-IK                             & \texttt{fmincon}                    & SOS-IK                              & \texttt{fmincon} & SOS-IK  & \texttt{fmincon} \\
     \midrule
     \midrule
     Position Error [m]        &$3.51\times10^{-4}$                    & $6.78\times 10^{-5}$               &$3.17\times 10^{-4}$                  & $4.93\times 10^{-5}$                &$1.64\times 10^{-4}$                  & $2.69\times 10^{-4}$ &$2.00\times 10^{-4}$ &$2.67\times 10^{-4}$ \\ 
%     \cmidrule(r){2-10}
     Solved [$\%$]          &$97.46$                               & $97.15$                            &$99.52$                             & $93.33$                             &$99.47$                               & $87.65$&$98.11$ &$53.19$ \\
     Total Time [min]       &$9.08$                                & $1.67$                             &$13.73$                             & $1.93$                              &$20.05$                               & $3.52$ &$21.52$ &$3.6$ \\
     \midrule \midrule
     Position Error [m]        &$6.68\times10^{-7}$           & $2.18\times 10^{-5}$   &$1.33\times 10^{-6}$    & $3.79\times 10^{-5}$    &$2.64\times 10^{-5}$      & $6.93\times 10^{-4}$ & $1.41\times 10^{-5}$ & $3.1\times 10^{-3}$ \\
%     \cmidrule(r){2-10}
     Solved [$\%$]          &$99.29$                                & $99.67$               &$99.68$                             & $99.84$                &$98.51$                               & $95.79$ & $99.72$ & $94.55$ \\
     Total Time [min]       &$18.88$                              & $9.44$                &$25.36$                             & $23.23$                 &$38.22$                                   & $46.10$ & $33.92$ & $51.69$ \\
     \bottomrule
   \end{tabular}
   }
 \end{center} 
\end{table*}

Theorems~\ref{thm:nearestPoint} and \ref{thm:stability} prove that by formulating IK as Problem~\ref{prob:nearestPoint}, globally optimal solutions can be recovered in certain workspace regions using convex relaxations such as Sparse-BSOS.
The experiment in Section~\ref{sec:glopt} demonstrates that such workspace regions are quite large even for complex kinematic chains like redundant spherical manipulators in two and three dimensions, which don't admit analytical solutions in the presence of joint limits. 
By solving 10,000 feasible IK problems, we demonstrate that our method (dubbed SOS-IK) outperforms a local numerical \texttt{fmincon} implementation of a joint angle-based IK solver in MATLAB in terms of percentage of recovered solutions, while also providing post-hoc numerical certificates of problem (in)feasibility.
The results in Table~\ref{table:comparison_1} show the final end-effector position error, percentage of feasible IK solutions found, and total computation time over all problems for manipulators with an increasing number of DoF.

The upper half of Table~\ref{table:comparison_1} shows results for planar manipulators of increasing DoF, where SOS-IK outperforms the local optimization in the percentage of solved problems for every problem instance.
While the solve times for \texttt{fmincon} are significantly lower in the planar case, we note that our method has well-understood polynomial scaling properties which present themselves favourably for higher DoF and dimensionality.
This can be seen in the in the case of a 12 DoF planar manipulator, where SOS-IK finds almost twice as many feasible solutions than \texttt{fmincon}, which falls into local minima.
We show how this trend continues for spherical (3D) manipulators in the bottom of Table~\ref{table:comparison_1}. 
As DoF increases, solve times become comparable and SOS-IK outperforms \texttt{fmincon} in terms of the number of problems successfully solved.
We expect this difference to be even more pronounced when further kinematic constraints are introduced, as the IK problem will admit more local minima.
%
	%
%In every observed case this difference was several orders of magnitude, serving as a certificate that a feasible solution exists for every problem in the problem set.
%%
%All the problems solved by our method for each manipulator tested resulted in duality gaps lower than 1, whereas gaps in infeasible problems were typically higher than $10^6$.

\section{Conclusion and Future Work}
\label{sec:conclusion} 

In this paper, we developed a novel and elegant formulation and solution of the IK problem for redundant manipulators. Our formulation of IK as a nearest point problem to a quadratic variety enabled us to prove the existence of problem instances admitting tight convex relaxations. Our use of convex relaxations provides certificates of global optimality alongside solutions. Our experiments demonstrated that the tight cases predicted by Theorem \ref{thm:stability} encompass many practical situations which can be efficiently solved via standard interior point methods. Furthermore, we empirically demonstrated that convex relaxations are efficient tools for reliably determining the (in)feasibility of IK problems; this is in stark contrast to local solver-based methods that need frequent re-starts and sampling-based methods that scale inefficiently with the number of joints. 

The tools presented in this paper hold promise for a variety of robotic manipulation and planning tasks. Careful selection of the nearest-point could be incorporated with task-specific goals like obstacle avoidance or low-energy motion planning. Methods for potentially extracting solutions from Sparse-BSOS solutions with rank greater than 1 warrant investigation. Additionally, theoretical tools developed in \cite{cifuentes2017local} and \cite{cifuentes2018geometry} could be used to precisely quantify the values of $\pmb{\xi}$ and goal poses for which SDP relaxations of our problem are tight. Using our method as a sub-solver in a branch-and-bound or mixed-integer nonlinear programming approach to inverse kinematics similar to \cite{dai2017global} also deserves attention as a means of incorporating complex obstacle avoidance constraints into a fast and efficient IK solver with performance guarantees. 

Finally, the Sparse-BSOS solver used here is a generic MATLAB library that does not exploit structure specific to our problem or use performance optimizations available in lower-level languages. Faster performance could be achieved by considering a custom sparse SDP relaxation (e.g., similar to the one in \cite{nie2009sum}) for sparsity patterns specific to IK. For kinematic chains with tens or hundreds of degrees of freedom, it may also be fruitful to investigate the use of Burer-Monteiro methods \cite{boumal2018deterministic}, which can require less time and memory, instead of standard interior point solvers. 
\vspace*{-1mm}
\section*{Acknowledgements}
This research was supported in part by a Dean's Catalyst Professorship from the University of Toronto.
We also gratefully acknowledge the support of the Natural Sciences and Engineering Research Council of Canada (NSERC) and the Ministry of Science and Education of the Republic of Croatia under the FLAG-ERA JTC 2016 project "RoboCom++".

\bibliographystyle{IEEEtran} % use IEEEtran.bst style
\bibliography{feasibility} % our bib file plus IEEE abreviation strings

\end{document}

% --- supplement: supplementary.tex ---

% Define the basic page style
\fancypagestyle{plain}{%
    \fancyhf{}%
    \fancyfoot[C]{}%
    \fancyhead[R]{\begin{tabular}[b]{r}\small\sf \UTIASdocument\\
        \small\sf\UTIASrevision\\
        \small\sf\today \end{tabular}}%
    \fancyhead[L]{\includegraphics[height=0.6in]%
        {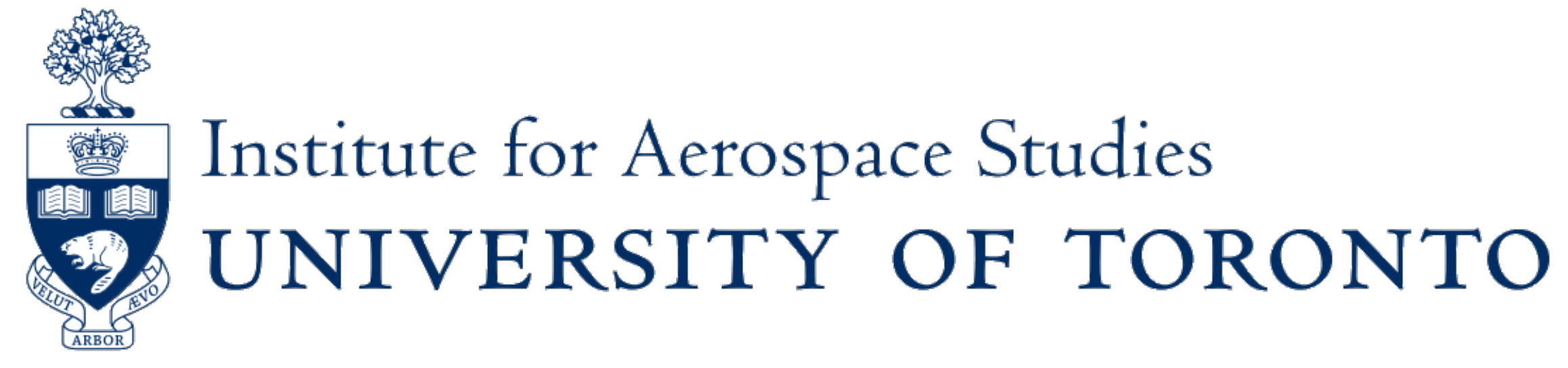} }%
    \renewcommand{\headrulewidth}{0pt}
    \renewcommand{\footrulewidth}{0pt}}

% Set the page style for the document
\pagestyle{fancy}

% Set the section labeling font
\allsectionsfont{\sf\bfseries}

% Set the caption labeling font
\renewcommand{\captionlabelfont}{\sf\bfseries}

%%%%%%%%%%%%%%%%%%%%%%%%%%%%%%%%%%%%%%%%%%%%%%%%%%%%%%%%%%%%%%%%%%%%%%%%%%%%%%%%
%% Headers and footers
%%%%%%%%%%%%%%%%%%%%%%%%%%%%%%%%%%%%%%%%%%%%%%%%%%%%%%%%%%%%%%%%%%%%%%%%%%%%%%%%
\lhead{ \includegraphics[height=0.6in]%
        {fig/utias_blue_on_clear.pdf} }
\rhead{ \begin{tabular}[b]{r}\small\sf \UTIASdocument\\
        \small\sf\UTIASrevision\\
        \small\sf\today \end{tabular}}
\chead{}
\lfoot{}
\cfoot{\thepage}
\rfoot{}

\renewcommand{\headrulewidth}{0pt}
\renewcommand{\footrulewidth}{0pt}

\newcommand{\kinematicsY}{Y_{\text{IK}}}
\newcommand{\fJacobian}{\nabla f(\bar{\pmb{\xi}}))}
\newcommand{\acq}{\mathrm{rank}(\fJacobian) = n - \mathrm{dim}_{\bar{\pmb{\xi}}} Y} 
\newcommand{\acqIK}{\mathrm{rank}(\fJacobian) = n - \mathrm{dim}_{\bar{\pmb{\xi}}} \kinematicsY} 
\newcommand{\acqIKPose}{\mathrm{rank}(\fJacobian) = n - \mathrm{dim}_{\bar{\pmb{\xi}}} \kinematicsY'}

%%%%%%%%%%%%%%%%%%%%%%%%%%%%%%%%%%%%%%%%%%%%%%%%%%%%%%%%%%%%%%%%%%%%%%%%%%%%%%%%
%% USER INPUT: Abstract
%%%%%%%%%%%%%%%%%%%%%%%%%%%%%%%%%%%%%%%%%%%%%%%%%%%%%%%%%%%%%%%%%%%%%%%%%%%%%%%%
\maketitle%

\begin{abstract}
  This document contains supplementary material for the paper titled ``Inverse Kinematics for Serial Kinematic Chains via Sum of Squares Optimization". A proof of Theorem 2 from the main paper is presented. It also contains a table with kinematic chain parameters used for simulation experiments reported on in the main paper.
\end{abstract}

%%%%%%%%%%%%%%%%%%%%%%%%%%%%%%%%%%%%%%%%%%%%%%%%%%%%%%%%%%%%%%%%%%%%%%%%%%%%%%%%
%% USER INPUT: Document Text
%%%%%%%%%%%%%%%%%%%%%%%%%%%%%%%%%%%%%%%%%%%%%%%%%%%%%%%%%%%%%%%%%%%%%%%%%%%%%%%%
\section{Proof of Theorem 2}
We begin by restating Theorems 1 \cite{cifuentes2017local} and 2 from the main paper.

\begin{theorem}[Nearest Point to a Quadratic Variety \cite{cifuentes2017local}] \label{thm:nearestPoint}
	Consider the problem 
	\begin{equation}
		\underset{\mbf{y} \in Y}{\min} \left\lVert\mbf{y} - \pmb{\xi}\right\rVert^2,
	\end{equation}
	where $Y \coloneqq \{\mbf{y} \in \mathbb{R}^n : f_1(\mbf{y}) = \cdots = f_m(\mbf{y}) = 0\}$,  $f_i$ quadratic.
	Let $\bar{\pmb{\xi}} \in Y$ be such that 
	\begin{equation}
		\acq.
	\end{equation}
	Then there is zero-duality-gap for any $\pmb{\xi} \in \mathbb{R}^n$ that is sufficiently close to $\bar{\pmb{\xi}}$.  
\end{theorem}

\begin{theorem}[Strong Duality] \label{thm:stability}
	If $\bar{\pmb{\xi}} \in \kinematicsY$ does not represent a fully extended configuration (i.e., the joint positions are not all collinear), and does not have any joints at their angular limits for specified base and goal positions, then the nearest point inverse kinematics QCQP in the main paper exhibits strong duality for all $\pmb{\xi}$ sufficiently close to $\bar{\pmb{\xi}}$.   
\end{theorem}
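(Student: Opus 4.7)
The plan is to invoke Theorem~\ref{thm:nearestPoint} directly by verifying its hypothesis at $\bar{\pmb{\xi}}$, namely the Jacobian rank condition $\acqIK$. Since this condition is exactly the statement that $\bar{\pmb{\xi}}$ is a smooth point of the quadratic variety $\kinematicsY$, whose codimension equals the number of independent defining quadratics, the theorem reduces to a regularity check on the kinematic variety at the configuration of interest.

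First I would enumerate the quadratics that cut out $\kinematicsY$ in the main paper's formulation: squared-distance equations $\|\Vector{p}_i - \Vector{p}_j\|^2 = d_{ij}^2$ for every pair of points whose separation is fixed by the mechanism (rigid links plus the auxiliary points used to encode joint axes), and a slack-lifted equality replacing each joint-angle inequality $\underline{c}_k \le \langle \Vector{u}_k, \Vector{v}_k\rangle \le \overline{c}_k$, e.g., $\langle \Vector{u}_k, \Vector{v}_k\rangle - \underline{c}_k - (s_k^-)^2 = 0$. I would then count: the number of independent distance equations matches the codimension of $\kinematicsY$ in configuration space (leaving one degree of freedom per joint), and each slack equation adds one new variable together with one new equation, preserving the match between row count and codimension.

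Next I would assemble $\nabla f(\bar{\pmb{\xi}})$ block by block. A distance row has gradient supported on the two point coordinates with block value proportional to $(\Vector{p}_i - \Vector{p}_j)$; linear independence of the collection of such rows is exactly the infinitesimal rigidity of the associated bar-and-joint framework. For a serial chain the only infinitesimal flex that can violate this rigidity is a motion perpendicular to the common line of all collinear joints, so non-collinearity of the joint positions -- i.e. the ``not fully extended'' hypothesis -- guarantees full row rank of the distance block. Each slack row, by contrast, has a $-2s_k^\pm$ entry in its own private slack coordinate, so it is independent of every other row as soon as its slack is strictly positive; this is precisely what ``no joint at its angular limit'' enforces. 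Combining the two blocks (noting the slack rows are automatically independent from the pure-point distance rows because of their dedicated slack columns) yields the full-rank Jacobian required by Theorem~\ref{thm:nearestPoint}, and strong duality for all nearby $\pmb{\xi}$ follows.

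The hard part will be the bookkeeping that ties ``fully extended'' precisely to the drop in rank of the distance-constraint block, rather than to some larger singular locus. In particular, I will need to confirm that the auxiliary distance constraints used in the main paper's distance-geometric encoding of revolute joints do not themselves introduce additional rank-deficient loci beyond the collinear one, and that the slack-column argument for the joint-limit rows remains valid when a distance row simultaneously couples the same spatial point appearing in a joint-angle constraint. Once this separation of concerns is established, Theorem~\ref{thm:nearestPoint} immediately delivers zero duality gap on a neighborhood of $\bar{\pmb{\xi}}$.
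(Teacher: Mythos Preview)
Your proposal is correct and follows essentially the same route as the paper: invoke Theorem~\ref{thm:nearestPoint} via a block lower-triangular Jacobian decomposition, use strict positivity of the slacks to get full rank of the joint-limit block, and argue that rank deficiency of the link-length block forces collinearity. The paper's formulation is simpler than you anticipate---there are no auxiliary distance-geometric points encoding joint axes and only a single slack per joint, just $N$ link-length constraints on joint positions $\mathbf{x}_0,\dots,\mathbf{x}_N$ and $N$ slack-lifted angle-limit constraints---so the ``hard part'' you flag collapses to the explicit bidiagonal null-space recursion $v_i(\mathbf{x}_i-\mathbf{x}_{i-1})=v_{i+1}(\mathbf{x}_{i+1}-\mathbf{x}_i)$, which immediately yields collinearity of all links.
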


\begin{spacing}{1.5}
\begin{proof}
According to Theorem 1 in the main paper, it is sufficient to show that 	$\acqIK$ holds. The number of variables $n = d(N-1) + N$ scales with dimension $d \in \{2, 3\}$ and the number of links $N$. Theorem 1.7 in \cite{milgram2004geometry} gives us $\mathrm{dim}_{\bar{\pmb{\xi}}} \kinematicsY = (d-1)(N-1) - 1$. Therefore, we need to show that 
\begin{align*}
\text{rank}(\fJacobian) &= d(N-1) + N - (d-1)(N-1) + 1\\
	 &= 2N.	
\end{align*}
The structure of the Jacobian matrix $\fJacobian \in \mathbb{R}^{2N \times (d(N-1)+N)}$ can be understood in terms of $N$ link length constraints representing the first $N$ rows, and $N$ joint limit constraints representing the final $N$ rows:

\begin{equation}
	\fJacobian = 
	\begin{bmatrix}
		\mbf{J}_{1,1} & \mbf{0}_{N\times N}\\
		\mbf{J}_{2,1} & \mbf{J}_{2,2}
	\end{bmatrix}.
\end{equation}
The block lower triangular structure is due to the independence of link length constraints on $\mbf{s}$. Since $\rank(\fJacobian) \geq \rank(\mbf{J}_{1,1}) + \rank(\mbf{J}_{2,2})$ for block lower triangular matrices, it is sufficient to demonstrate that  $\rank(\mbf{J}_{1,1}) = \rank(\mbf{J}_{2,2}) = N$. Since $\mbf{J}_{2,2} = \diag(2\mbf{s})$, and $s_i > 0 \ \forall \ i \in \{1, \ldots, N\}$ by assumption, $\rank(\mbf{J}_{2,2}) = N$. It remains to demonstrate that 

\begin{equation}
\mbf{J}_{1,1}^T = 2
\begin{bmatrix}
	\mbf{x}_1 {-} \mbf{x}_0 & \mbf{x}_1 {-} \mbf{x}_2 & \mbf{0} & \cdots & \mbf{0} \\
	 \mbf{0} & \mbf{x}_2 {-} \mbf{x}_1 & \mbf{x}_2 {-} \mbf{x}_3 & \mbf{0} & \vdots \\
	 \vdots & \mbf{0} & \ddots & \ddots & \mbf{0} \\
	 \mbf{0} & \cdots & \mbf{0} & \mbf{x}_{N-1} {-} \mbf{x}_{N-2} & \mbf{x}_{N-1} {-} \mbf{x}_N
\end{bmatrix}
\end{equation}
has (full) rank $N$. Suppose $\rank(\mbf{J}_{1,1}) \neq N$. Then there exists $\mbf{v} \in \mathbb{R}^N$ such that $\mbf{v} \neq \mbf{0}$ and $\mbf{J}_{1,1}^T \mbf{v} = \mbf{0}$. This implies that 
\begin{equation} \label{eq:rankProof}
	v_i(\mbf{x}_i - \mbf{x}_{i-1}) = v_{i+1}(\mbf{x}_{i+1} - \mbf{x}_i), \ \ \forall i = 1, \ldots, N-1.
\end{equation}
Since there exists some $i$ such that $v_i \neq 0$, and the link lengths $l_i = \left\lVert \mbf{x}_i - \mbf{x}_{i-1}\right\rVert$ are all greater than zero, Equation \ref{eq:rankProof} tells us that $v_i \neq 0$ for all $i$. Therefore, $\mbf{x}_i - \mbf{x}_{i-1} = c_{ij}(\mbf{x}_j - \mbf{x}_{j-1})$ for all valid pairs of $i, j$, where  
	\begin{equation}
		c_{ij} = \frac{v_j}{v_i} \neq 0.
	\end{equation}
In other words, the link orientations are all collinear, which contradicts the assumption that the arm is not fully extended. 	Therefore, $\rank(\mbf{J}_{1,1}) = N$ and $\rank(\fJacobian) = 2N$, completing the proof.

\end{proof}

\begin{corollary}
	Theorem \ref{thm:stability} holds when the orientation of the end effector is also constrained. 
\end{corollary}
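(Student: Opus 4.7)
The plan is to adapt the rank-counting argument from the proof of Theorem \ref{thm:stability} to the augmented constraint set produced by specifying the end-effector orientation. Writing $\kinematicsY'$ for the variety cut out by the original link-length and joint-limit constraints together with the new orientation constraints, it suffices by Theorem \ref{thm:nearestPoint} to verify that $\acqIKPose$ holds at the feasible point $\bar{\pmb{\xi}}$.

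First I would make the orientation encoding explicit. In the distance-geometry formulation of the main paper, a target end-effector orientation is imposed by adding a fixed number $k$ of quadratic equations that pin the relative positions of auxiliary points rigidly attached to the last link, with $k$ equal to the rotational degrees of freedom $\binom{d}{2}$. I would then verify, via the argument underlying Theorem 1.7 of \cite{milgram2004geometry}, that $\dim_{\bar{\pmb{\xi}}}\kinematicsY'$ drops relative to $\dim_{\bar{\pmb{\xi}}}\kinematicsY$ by exactly the number of independent new equations, so the target rank of $\fJacobian$ simply increases from $2N$ to $2N + k$.

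Next I would extend the block decomposition of $\fJacobian$. The new orientation rows involve only the last link's coordinates and the auxiliary coordinates, and vanish in the slack columns, so the joint-limit block $\mbf{J}_{2,2} = \diag(2\mbf{s})$ is untouched and retains rank $N$. The task then reduces to showing that the enlarged upper block, consisting of the $N$ original link-length rows together with the $k$ new orientation rows, has full row rank $N + k$.

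The main obstacle is this enlarged full-rank check, which I would attack by contradiction in the spirit of Equation \ref{eq:rankProof}. Any nontrivial null combination of these rows would, through the link-length rows alone, propagate as before to force every coefficient $v_i$ to be nonzero and to make successive link vectors $\mbf{x}_i - \mbf{x}_{i-1}$ collinear. The new orientation rows couple this combination to the rigidly-attached points on the end-effector, and I expect the non-fully-extended hypothesis together with the non-degeneracy of the auxiliary configuration to force both the orientation coefficients and every $v_i$ to vanish, contradicting non-triviality. Once this independence is established, $\rank(\fJacobian) = 2N + k$ equals $n - \dim_{\bar{\pmb{\xi}}}\kinematicsY'$, and Theorem \ref{thm:nearestPoint} yields strong duality.
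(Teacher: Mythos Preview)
Your plan is viable in outline but takes a materially different route from the paper, and in doing so imports work the paper avoids entirely. The paper does not encode the end-effector orientation via $k=\binom{d}{2}$ rigid-body equalities on auxiliary points. Instead, it observes that fixing the pose of an $(N{+}1)$-link chain amounts to fixing both $\mbf{x}_N$ and $\mbf{x}_{N+1}$, which reduces the problem to the already-analyzed $N$-link position-only case together with \emph{one} additional joint-limit inequality (on the angle between $\hat{\mbf{z}}_N$ and $\hat{\mbf{z}}_{N+1}$) and its slack $s'$. Because this extra constraint is a joint-limit-type inequality and is inactive by hypothesis, $\dim_{\bar{\pmb{\xi}}}\kinematicsY'$ is unchanged; $n$ goes up by one; and the new row lands in the \emph{diagonal} slack block $\mbf{J}_{2,2}$, so the rank goes up by one trivially. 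No modification to the collinearity argument for $\mbf{J}_{1,1}$ is needed at all.

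By contrast, your encoding places the new rows in the upper block and forces you to (i) justify a dimension drop of exactly $k$ via Milgram--Trinkle in a setting that theorem was not stated for, and (ii) extend the null-vector propagation argument of Equation~\ref{eq:rankProof} to rule out combinations supported only on the new orientation rows---a step you flag as ``I expect'' but do not carry out. Neither step is obviously false, but both are genuine obligations that the paper's formulation sidesteps. If you want to follow the paper's intended argument, recast the orientation constraint as a single extra joint-limit inequality with slack; then the corollary is a one-line addendum to Theorem~\ref{thm:stability} rather than a reworked rank computation.
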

\begin{proof}
	Let $\kinematicsY'$ be the variety $\kinematicsY$ with an additional variable $s'$ and an additional constraint on the angle between $\hat{\mbf{z}}_N$ and $\hat{\mbf{z}}_{N+1} = \frac{1}{l_{N+1}}(\mbf{x}_{N+1} - \mbf{x}_N)$. That is, we additionally add the constraint 
	\begin{equation}
		\left\lVert \hat{\mbf{z}}_{N+1} - \hat{\mbf{z}}_{N}\right\rVert^2 + s'^2 -
  2\left(1 - \cos\alpha_{N+1}\right) = 0
	\end{equation}
	to our variety $Y_{IK}'$, where $\mbf{x}_N$ and $\mbf{x}_{N+1}$ are the fixed specified locations of the final joint and the end effector respectively, and $\alpha_{N+1}$ is the angular limit for the final joint. This variety $\kinematicsY'$ describing the feasible configurations for an $(N+1)$-DoF kinematic chain with its end effector and orientation fixed is therefore equivalent to $\kinematicsY$ for an $N$-DoF kinematic chain, with one variable and one constraint added. Recall that we need to prove that $\acqIKPose$. The constraint cannot affect $\mathrm{dim}_{\bar{\pmb{\xi}}} \kinematicsY'$ because we assume in our premise that none of the joint angles are activating their constraints, including the angular constraint corresponding to $s'$. The number of variables $n$ is simply increased by 1. Therefore, we must simply show that $\text{rank}(\fJacobian)$ increases by 1 with the additional variable and constraint added. This follows immediately from the fact that the additional constraint increases the rank of $\rank(\mbf{J}_{2,2})$ by 1 and does not affect $\rank(\mbf{J}_{1,1})$. 
\end{proof}

\section{Parameter Table}
\begin{table}[h!]
%  \begin{center}
	\centering
        \caption{Parameters used for the experiments in the main paper.}
        \vspace{-2mm}
    \label{tab:rankOneParams}
    \begin{tabular}{ccccccccccc} % 
    	joint & 1 & 2 & 3 & 4 & 5 & 6 & 7 & 8 & 9 & 10 \\
    	\midrule
        $|\theta_{\text{i}}|_{\tiny \text{max}}$& $\frac{\pi}{4}$ & $\frac{\pi}{4}$ & $\frac{\pi}{8}$ & $\frac{\pi}{4}$ & $\frac{\pi}{4}$ & $\frac{\pi}{2}$ & $\frac{\pi}{4}$ & $\frac{\pi}{4}$ & $\frac{\pi}{2}$ & $\frac{\pi}{8}$\\
        $l_i$ & 2 & 2 & 1 & 2 & 3 & 2 & 4 & 4 & 1 & 2\\
    \end{tabular}
%  \end{center}
   
\end{table}

\end{spacing}

%%%%%%%%%%%%%%%%%%%%%%%%%%%%%%%%%%%%%%%%%%%%%%%%%%%%%%%%%%%%%%%%%%%%%%%%%%%%%%%%
%% USER INPUT: Bibliography
%%%%%%%%%%%%%%%%%%%%%%%%%%%%%%%%%%%%%%%%%%%%%%%%%%%%%%%%%%%%%%%%%%%%%%%%%%%%%%%%
%\newpage
\bibliographystyle{plain}
\bibliography{../feasibility}